\documentclass[11pt]{article}

\usepackage[preprint]{acl}

\usepackage{times}
\usepackage{latexsym}
\usepackage{comment}
\usepackage[T1]{fontenc}

\usepackage[utf8]{inputenc}

\usepackage{microtype}

\usepackage{inconsolata}

\usepackage{graphicx}
\graphicspath{{fig/}}

\usepackage{amsmath, amsfonts, amsthm}
\usepackage{booktabs}
\usepackage{array}
\usepackage{pdflscape}
\usepackage{tabularx}

\usepackage{xcolor}
\usepackage{listings}
\lstdefinestyle{leadimageprompt}{%
  basicstyle=\fontsize{6.4pt}{7.0pt}\selectfont\ttfamily,
  breaklines=true,
  breakatwhitespace=true,
  columns=fullflexible,
  keepspaces=true,
  showstringspaces=false,
  frame=single,
  framerule=0.2pt,
  rulecolor=\color{black!25},
  framesep=2pt,
  xleftmargin=0pt,
  xrightmargin=0pt,
  aboveskip=0.25em,
  belowskip=0.25em,
  lineskip=-0.3pt,
  tabsize=2
}

\newtheorem{proposition}{Proposition}

\theoremstyle{remark}

\newcommand{\E}{\mathbb{E}}
\newcommand{\1}{\mathbf{1}}

\DeclareMathOperator*{\argmax}{arg\,max}

\newcommand{\ruleimg}[1]{%
  \includegraphics[width=0.9\linewidth,height=0.48in,keepaspectratio]{#1}%
}

\newcommand{\ambigimg}[1]{%
  \begin{minipage}[c][0.72in][c]{\linewidth}
    \centering
    \includegraphics[width=0.95\linewidth,height=0.68in,keepaspectratio]{#1}
  \end{minipage}%
}

\title{The Differential Reasoning Router: Operationalizing Cost-Aware LLM Annotation in E-commerce}

\author{Cheng Lyu, \ Jingyue Zhang, \ Vinny DeGenova, \ Mengwei Li, \ and Yuanli Pei\\
  Wayfair\\
  \texttt{\{clyu1, jzhang2, vdegenova, mli2, ypei\}@wayfair.com}
}

\begin{document}
\maketitle

\begin{abstract}
Large Language Models (LLMs) are increasingly used to annotate structured product data in e-commerce, but early deployment often begins as a cold-start problem: only limited pre-launch labels are available, the value of expensive reasoning is unknown, and human review is needed before the system can be trusted at scale.
This challenge is especially common in rule-based annotation workflows, where each item must satisfy multiple business rules and both model errors and ambiguous rule boundaries affect final decisions.
We introduce the Differential Reasoning Router (DRR), a cost-aware framework for cold-start LLM annotation that jointly optimizes model selection and human escalation.
Rather than treating a reasoning model as a default fallback, DRR estimates separate success probabilities for a direct model and a reasoning model at both the sample and business-rule levels, enabling adaptive routing: easy cases are handled directly, reasoning is reserved for cases where it is expected to improve the decision, and likely double-failure or rule-disagreement cases are escalated to human annotators.
The resulting labels provide targeted ground truth for prompt engineering, supervised fine-tuning, calibration, and rule refinement, enabling a gradual shift from human-heavy cold-start annotation toward high-confidence automated routing.
In a production e-commerce workflow, DRR reaches accuracy parity with the strongest confidence-based router while achieving more than 60\% reasoning-token cost savings.
\end{abstract}
\section{Introduction}

In e-commerce, high-quality structured product data are essential for search, recommendation, merchandising, and compliance systems.
Maintaining these data often requires rule-based judgments over multimodal evidence, such as whether an image or product description supports a required attribute, whether the displayed product matches the listing, or whether the evidence is too ambiguous for an automated decision.
Human reviewers can make these judgments reliably, but catalog review at billion-product scale creates high cost, long latency, and delayed time-to-market for new or updated listings.

Large Language Models (LLMs) and autonomous agents have shown promise in automating parts of this workflow \citep{mohta2023large, calderon2025alternative}, and prior work has studied when LLM labels can substitute for or complement human annotation.
Existing approaches that select between direct and reasoning models rely heavily on human-labeled datasets.
However, new product attributes and evolving business definitions often require immediate catalog-wide deployment before a stable and representative human-annotated set can be constructed.
Moreover, human annotations are initially scarce and expensive to obtain.
Random sampling may fail to capture challenging or ambiguous cases, and early annotations can quickly become outdated as preliminary rule definitions are refined over time.
This creates a cold-start annotation problem: direct prompting is inexpensive but brittle, reasoning can help selectively at higher cost, and the system must allocate traffic across direct inference, additional reasoning, and human review while evidence about the task is still being accumulated.
In this context, cold start therefore refers not to the absence of labels, but to a mandatory yet limited pre-launch seed set that is small relative to catalog scale and still evolving with the business rules.

Existing work on model routing improves adaptive computation and test-time compute allocation \citep{wang2025reason, liang2025thinkswitcher, ding2025best, damani2024learning}, but cold-start annotation introduces two requirements that are not addressed by conventional routing methods alone.
First, the router must estimate the \emph{marginal value of reasoning}: whether a more expensive reasoning model is likely to change an otherwise direct decision enough to justify its token cost.
Scalar confidence and calibration methods can identify uncertain predictions \citep{kadavath2022language, ulmer2024calibrating, khanmohammadi2025calibrating}, but they do not directly distinguish examples where reasoning helps from examples where both automated models fail.
Second, the router must treat human review as part of the policy.
Reasoning models are not reliable universal fallbacks \citep{shojaee2025illusion, zhu2025can}, and some failures reflect ambiguous business rules or missing ground truth rather than insufficient model capacity.
In these cases, human review is not only a safety mechanism for the current decision, but also a way to acquire labels that reduce future uncertainty.

We propose the \textbf{Differential Reasoning Router (DRR)}, a cost-aware framework for cold-start LLM annotation.
DRR predicts separate success probabilities for a lightweight Direct model ($M_d$) and a higher-cost Reasoning model ($M_r$), both at the sample level and at the level of individual business rules.
These differential estimates allow the router to send easy cases to $M_d$, reserve $M_r$ for cases with positive expected reasoning value, and escalate likely double-failure or rule-disagreement cases to human reviewers.
As reviewed labels accumulate, they can be used for prompt iteration, supervised fine-tuning, calibration, and rule refinement, enabling the system to shift gradually from human-heavy cold-start operation toward higher-confidence automated routing.

Our contributions are threefold.
First, we formalize cold-start LLM annotation as a joint routing and label-acquisition problem in which the system chooses among direct inference, reasoning, and human review under limited ground truth.
Second, we introduce a differentially supervised, budget-constrained routing policy that estimates the marginal value of reasoning while detecting cases that require abstention.
Third, we evaluate DRR in a production e-commerce attribute-validation workflow, where it achieves accuracy parity with the strongest confidence-based router while yielding roughly one-fifth higher reasoning-token savings and identifying rule-level disagreement patterns that inform ground-truth collection and rule refinement.
\section{Related Work}

\paragraph{LLMs as Data Annotators.}
\citet{mohta2023large} benchmark LLMs as annotators across several tasks and show that downstream models trained on LLM-generated labels consistently underperform those trained on human labels.
Complementing this empirical view, \citet{calderon2025alternative} propose a statistical framework for determining when LLMs can credibly replace human annotators using a limited amount of human labeling.
This motivates deployment settings where human labels are scarce but strategically valuable, particularly when a new annotation task is still being calibrated.
Domain-specific studies, including financial-data annotation, further show that task expertise, prompt design, and review cost shape whether LLM labels can substitute for human judgments \citep{aguda2024large}.
Nevertheless, in domain-specific industrial deployments, hallucinations and reliability concerns often remain a key barrier to adoption.
Efforts to mitigate this via calibration and uncertainty estimation, such as probing whether models ``know what they know'' \citep{kadavath2022language} or learning confidence signals from model generations and stability under perturbations \citep{ulmer2024calibrating, khanmohammadi2025calibrating}, are widespread.

\paragraph{Adaptive Computation and Model Routing.}
\citet{wang2025reason} introduce a semantic router for vLLM that predicts a query's reasoning needs and routes it to a cheaper non-reasoning path when appropriate, reducing unnecessary test-time compute.
\citet{liang2025thinkswitcher} propose ThinkSwitcher, which dynamically switches a single model between short (fast) and long (slow) chain-of-thought modes based on input complexity.
\citet{ding2025best} present BEST-Route, an adaptive routing framework for test-time compute allocation that selects both the model and the best-of-$n$ sampling budget to meet a target quality at minimal estimated cost.
Related routing work also studies learning cost-quality trade-offs from data \citep{damani2024learning}.
While these methods primarily optimize general efficiency, they typically treat the stronger model (or more compute) as a reliable fallback rather than modeling human review as an action for both abstention and label acquisition.

\paragraph{Learning to Defer and Selective Prediction.}
Learning-to-defer and selective-prediction methods train systems to decide when to defer to an expert rather than make an automated prediction \citep{madras2018predict, mozannar2020consistent}.
These approaches typically study abstention for a fixed predictive model or a fixed human expert under accuracy, fairness, or workload objectives.
They do not directly address settings where abstention must be coordinated with model selection between direct and reasoning inference paths, or where deferred examples also serve as ground truth for prompt, rule, and policy refinement during cold-start deployment.

\paragraph{Limits of Reasoning Models.}
\citet{shojaee2025illusion} describe an ``illusion of thinking'', showing that inference-time reasoning effort (\textit{e.g.}, thinking-token compute) can exhibit diminishing or even non-monotonic returns as problem complexity increases.
In such cases, accuracy may stagnate or collapse even as models initially spend more compute.
\citet{zhu2025can} analyze behavioral divergence under ``save-thinking'' prompts, showing that reasoning models do not reliably suppress deliberation and can re-engage in explicit or implicit thinking depending on the instance.
These findings motivate routing policies that estimate the marginal value of reasoning over direct inference instead of assuming that harder examples should automatically receive more reasoning compute.
\section{Methodology}
\label{sec:methodology}

We formulate rule-based product annotation as cost-constrained multimodal routing.
Each query $q = (x_{\text{img}}, x_{\text{txt}})$ contains a product image and metadata, and must be evaluated against business rules $\mathcal{R} = \{r_1, \dots, r_k\}$, where $k$ is the total number of rules.
A product is valid only if all rules are satisfied.
DRR chooses among a low-cost Direct model ($M_d$), a higher-cost Reasoning model ($M_r$), and human review.
For each model $m \in \{d,r\}$ and rule $r_j$, $y_{m,j}=1$ means $M_m$ agrees with human ground truth on that rule, while $y_m^{\text{sys}}=1$ means it is correct on the full rule set.

The policy $\pi(q) \in \{M_d, M_r, \text{Human}\}$ must choose among three possible actions:  the direct model is sufficient, the reasoning model is worth its extra cost, or human review is needed.
Let $C_d(q)$ and $C_r(q)$ denote the model costs, and define the incremental reasoning cost $\Delta C_q = C_r(q)-C_d(q) \ge 0$.
The ideal policy minimizes expected error subject to an average reasoning budget $B_{\text{target}}$ and a human-referral budget $H_{\text{target}}$:
\begin{equation}
\begin{aligned}
    \min_{\pi} \quad & \E_q[\text{Risk}(\pi(q),q)] \\
    \text{s.t.} \quad & \E_q[\Delta C_q \cdot \1\{\pi(q)=M_r\}] \le B_{\text{target}}, \\
    & \E_q[\1\{\pi(q)=\text{Human}\}] \le H_{\text{target}}.
\end{aligned}
\label{eq:routing_objective}
\end{equation}
We enforce the reasoning budget during training with a nonnegative budget multiplier, and handle human referrals with validation-tuned thresholds.

\begin{figure}[t]
    \centering
    \includegraphics[width=\columnwidth]{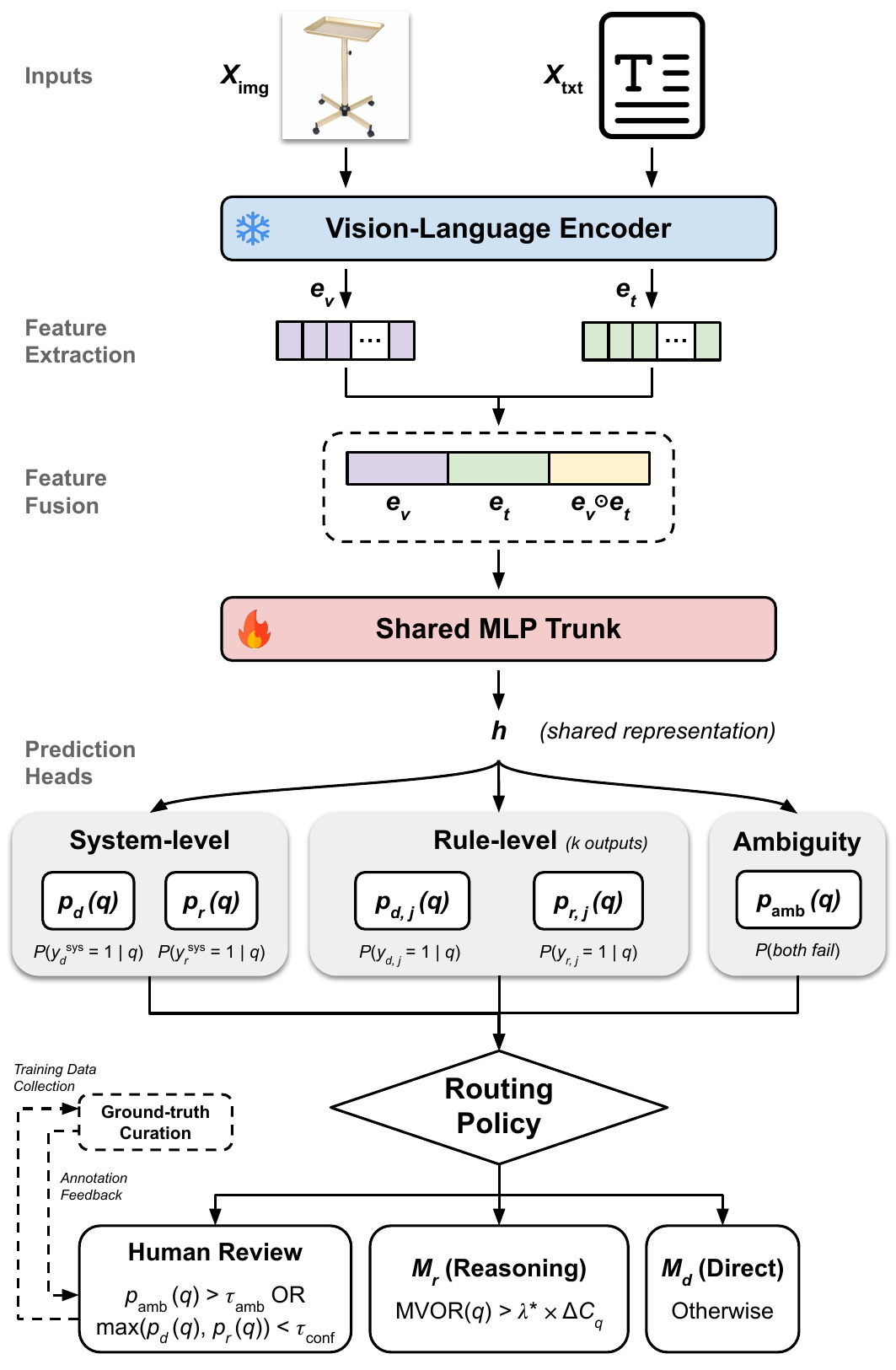}
    \caption{Overall DRR architecture for cost-aware three-way routing with human-feedback support.}
    \label{fig:architecture}
\end{figure}

Figure~\ref{fig:architecture} summarizes the design.
Expensive multimodal features are computed once and cached; online routing uses only a small supervised model, which keeps the decision layer compatible with production latency.

\subsection{Architecture: Lightweight Differential Predictor}

\paragraph{Feature extraction and fusion.}
We use a pretrained multimodal encoder to obtain image embeddings $e_v$ and text embeddings $e_t$.
The router does not fine-tune this encoder; it consumes cached features, which keeps routing fast and stable.
To capture both unimodal evidence and image-text alignment, we concatenate image, text, and interaction features, $z = [e_v; e_t; e_v \odot e_t]$, then pass $z$ through a shared MLP trunk.

\paragraph{Prediction heads.}
DRR uses three types of heads.
System-level heads predict $\hat{p}_d(q)=P(y_d^{\text{sys}}=1\mid q)$ and $\hat{p}_r(q)=P(y_r^{\text{sys}}=1\mid q)$, the probabilities that each automated model succeeds on the full rule set.
Rule-level heads predict $\hat{p}_{d,j}(q)=P(y_{d,j}=1\mid q)$ and $\hat{p}_{r,j}(q)=P(y_{r,j}=1\mid q)$, the probabilities that $M_d$ and $M_r$ agree with human ground truth on rule $r_j$.
An ambiguity head predicts $\hat{p}_{\text{amb}}(q)$, the probability that both automated models fail, so the router can escalate cases where more reasoning is unlikely to help.

\subsection{Training Objective}

DRR is trained to estimate annotator reliability and to spend reasoning budget only where it is expected to improve the decision.
For a fixed budget multiplier $\lambda$, we combine these goals in an objective minimized over the router parameters $\theta$:

\begin{equation}
\begin{aligned}
    \mathcal{J}(\theta, \lambda) & = \mathcal{L}_{\text{sup}}(\theta) + \mathcal{L}_{\text{route}}(\theta) \\
    & + \lambda(\bar{C}_{\text{route}} - B_{\text{target}}).
\end{aligned}
\label{eq:training_objective}
\end{equation}
Here $\mathcal{L}_{\text{sup}}$ trains the system-level, rule-level, and ambiguity heads with binary cross-entropy losses; $\mathcal{L}_{\text{route}}$ is the expected error of the relaxed routing policy; and $\bar{C}_{\text{route}}$ is the expected incremental reasoning cost.
The multiplier $\lambda$ acts as the learned price of spending reasoning budget: larger values make routing to $M_r$ harder to justify.

The supervision is differential rather than distillational: DRR learns whether each model matches human ground truth, not whether $M_d$ should imitate $M_r$.
This distinction is important because reasoning can have negative value when $M_r$ fails on a query while $M_d$ gets it correct.
The ambiguity target is positive when both system-level correctness labels are zero, i.e., $y_d^{\text{sys}}=0$ and $y_r^{\text{sys}}=0$; we use a weighted binary cross-entropy loss for this head because double failures are less common but operationally important.

The routing loss uses the system-level success estimates to compare the two automated actions.
For a query that is not sent to human review, the estimated direct and reasoning risks are $1-\hat{p}_d(q)$ and $1-\hat{p}_r(q)$, respectively.
The central routing quantity is therefore the \textbf{Marginal Value of Reasoning} (MVOR), the predicted reduction in risk from using $M_r$ instead of $M_d$:
\begin{equation*}
    \text{MVOR}(q) = \hat{p}_r(q) - \hat{p}_d(q).
\end{equation*}
A positive MVOR means reasoning is expected to improve correctness; a negative MVOR means the direct model is expected to be better.
During training, the binary choice between $M_d$ and $M_r$ is relaxed with a sigmoid policy so that the routing loss remains differentiable:
\begin{equation*}
    \pi_r(q) = \sigma\left(\frac{\text{MVOR}(q) - \lambda \Delta C_q}{T}\right),
\end{equation*}
where $T$ is a temperature.
The term $\lambda \Delta C_q$ is the opportunity cost of spending reasoning compute on query $q$.

The relaxed policy then determines both terms in Equation~\ref{eq:training_objective}: $\mathcal{L}_{\text{route}}$ averages the expected automated error under this mixture of $M_d$ and $M_r$, while $\bar{C}_{\text{route}}$ averages the corresponding incremental reasoning cost.
Thus Equation~\ref{eq:training_objective} connects probability learning with budgeted routing: the supervised term learns which model is likely to be correct, while the routing and Lagrangian terms learn how selectively to spend reasoning tokens.

\subsection{Optimization}

We optimize Equation~\ref{eq:training_objective} with alternating primal and dual updates.
The model parameters $\theta$ are updated by gradient descent, while the multiplier is updated by gradient ascent on the budget violation.
With $\lambda=e^{\nu}$, the log-space update is
\begin{equation*}
    \nu \leftarrow \nu + \eta_\lambda (\bar{C}_{\text{route}} - B_{\text{target}}).
\end{equation*}
If expected reasoning cost exceeds the budget, $\lambda$ increases and the router becomes more selective; if the model is under budget, $\lambda$ decreases and more reasoning calls are allowed.
Appendix~\ref{subsec:convergence_proof} gives the budget-response and positivity properties of this log-space update, which justify its use as a stable budget price during training.

\subsection{Inference Policy}

At inference time, DRR uses the learned probabilities and the converged budget multiplier $\lambda^*$.
It first applies the human gate:
\begin{equation*}
    \hat{p}_{\text{amb}}(q) > \tau_{\text{amb}}
    \quad \text{or} \quad
    \max(\hat{p}_d(q),\hat{p}_r(q)) < \tau_{\text{conf}}.
\end{equation*}
If either condition holds, the query is routed to human review, covering both predicted double failures and broadly uncertain cases.

For the remaining automated cases, the model-selection decision is
\begin{equation}
    \pi(q) =
    \begin{cases}
        M_r, & \text{if } \text{MVOR}(q) > \lambda^* \Delta C_q, \\
        M_d, & \text{otherwise.}
    \end{cases}
\end{equation}
Appendix~\ref{subsec:threshold_optimality} derives this cost-adjusted threshold rule for the automated cases that pass the human gate.
The thresholds $\tau_{\text{amb}}$ and $\tau_{\text{conf}}$ are tuned on validation data under a human-referral budget.
This gives operators two independent controls: $B_{\text{target}}$ governs reasoning spend, while the escalation thresholds govern human review.

\subsection{Cold-start Controls and Feedback Loop}
\label{subsec:cold_start}

The policy is designed to support cold-start deployment and continuous improvement.
When task-specific labels are scarce, DRR can initially use conservative human-escalation thresholds to maintain catalog quality while routing uncertain or likely double-failure cases to reviewers.
These escalations form an informative label stream rather than a purely random sample: they concentrate on ambiguous examples, cases where $M_d$ and $M_r$ disagree, and instances where both automated paths appear unreliable.
As review labels accumulate, DRR can be retrained or recalibrated, prompts or supervised fine-tuning data can be updated, and thresholds can be relaxed to reduce human traffic while retaining an abstention pathway for unreliable cases.
Rule-level heads further reveal which business rules underlie disagreement, creating a feedback loop for ground-truth collection and rule refinement.
\section{Experiments}
\label{sec:experiments}

\subsection{Setup}

\paragraph{Task.}
We evaluate on a production lead-image eligibility task at a major e-commerce platform.
Each sample contains a product image and metadata, and the system must decide whether the image can serve as the primary product image.
Human ground truth evaluates each sample against $k{=}11$ conjunctive business rules spanning objective checks (e.g., no overlaid text, no visible person) and subjective visual judgments (e.g., product is the primary focus, front of product is sufficiently visible).
A single rule failure makes the image invalid, so the task is sensitive to both model errors and ambiguous rule interpretation.
Appendix~\ref{subsec:lead_image_rules} provides the complete rule definitions, and Appendix~\ref{subsec:subjective_rule_examples} shows borderline subjective examples.
A sample is \emph{unsolvable} when both $M_d$ and $M_r$ fail on the full rule set, which motivates DRR's human-review action.

\paragraph{Data split and cold-start protocol.}
The labeled corpus contains 9{,}358 examples, split into 6{,}550 training, 1{,}403 validation, and 1{,}405 test examples.
These labels come from the mandatory pre-launch review process used for launch approval, model/vendor selection, prompt tuning, and operating-point selection; they were not collected as a separate production annotation campaign solely for DRR.
This is the sense in which the deployment is cold-start: labels exist, but they are limited relative to catalog scale and still evolving with the business rules.
DRR uses the training split to learn task-specific reliability heads, while both DRR and MaxConf use the same validation split to select human-review thresholds.
We therefore interpret the comparison as an operational use of required pre-production evidence, not as a label-efficiency claim.

\paragraph{Models.}
Both automated annotators use Gemini~2.5 Flash Lite with the same structured output prompts; Appendix~\ref{app:lead-image-prompt} provides the full annotation prompts.
The Direct model ($M_d$) runs with thinking disabled, while the Reasoning model ($M_r$) runs with dynamic thinking enabled, allowing the model to spend additional reasoning tokens when invoked.
Note that DRR is not tied to this model version: the router only requires candidate model outputs, realized costs, and ground-truth agreement labels, so newer direct or reasoning models can be substituted by regenerating candidate predictions and recalibrating or retraining the routing layer.
We use OpenCLIP ViT-L/14 for feature extraction, with a shared MLP trunk.
Embeddings are precomputed and cached. 

\paragraph{Baselines.}
The baselines isolate the design choices in Section~\ref{sec:methodology} using the same model outputs.
Direct-only always uses $M_d$ and represents the cheapest automated deployment.
Reasoning-only always uses $M_r$ and serves as the reasoning-token cost baseline.
Random routes uniformly between $M_d$ and $M_r$, testing whether learned routing adds value beyond arbitrary model selection.
Oracle is a best-of-two automated selector that chooses a correct automated model whenever either $M_d$ or $M_r$ is correct; it shows the limit of automated selection without human review.
MaxConf follows confidence-based query-aware LLM selection baselines \citep{maurya2025selectllm}: it routes a sample to human review when the maximum model-reported Direct/Reasoning confidence is below a threshold, and otherwise selects $M_r$ only when its confidence exceeds $M_d$'s.
Reasoning-token savings for MaxConf count the incremental reasoning cost only on samples where the policy selects $M_r$, using the same accounting as DRR.
MVOR-only uses the cost-aware $M_d$/$M_r$ rule without human escalation, isolating the need for abstention.
For Pareto sweeps, Ambiguity-only uses only the learned double-failure score for human escalation.
Thresholded methods are tuned on the validation set under a ${\leq}$20\% human-referral budget.

\paragraph{Metrics.}
\emph{End-to-end accuracy} measures whether the final routed decision is correct: automated routes are correct when the selected model matches human ground truth on the full rule set, and human-reviewed samples are counted as correct.
\emph{Human review rate} reports the fraction of samples routed to reviewers.
\emph{Reasoning token savings} measures reduction in reasoning-token spend relative to always using $M_r$; it does not include human-review labor.
For diagnostic analyses, we distinguish two granularities of double failure: an \emph{unsolvable instance} is a full sample where both $M_d$ and $M_r$ fail the conjunctive decision, while a \emph{rule-level joint failure} is a specific business rule on which both models disagree with human ground truth.

\subsection{Main Results}

\begin{table}[t]
\centering
\resizebox{\columnwidth}{!}{%
\begin{tabular}{lccc}
\toprule
\textbf{Method} & \textbf{Acc (\%)} & \shortstack{\textbf{Human}\\\textbf{Review (\%)}} & \shortstack{\textbf{Reasoning-Token}\\\textbf{Savings (\%)}} \\
\midrule
Direct-only ($M_d$) & 68.0 \scriptsize{[65.6, 70.5]} & --- & 100.0 \\
Reasoning-only ($M_r$) & 69.3 \scriptsize{[66.8, 71.6]} & --- & 0.0 \\
Random & 68.6 \scriptsize{[66.5, 70.7]} & --- & 50.0 \\
\midrule
MaxConf & 82.0 \scriptsize{[80.0, 84.0]} & 20.0 & 55.2 \\
MVOR-only & 69.5 \scriptsize{[67.1, 71.9]} & --- & 56.4 \\
\midrule
Oracle & 79.1 \scriptsize{[77.0, 81.1]} & --- & 86.6 \\
\textbf{DRR (ours)} & \textbf{82.8} \scriptsize{[80.8, 84.8]} & 21.7 & \textbf{66.2} \\
\bottomrule
\end{tabular}%
}
\caption{Main results on the product attribute test set ($N{=}1{,}405$). Acc is end-to-end accuracy after routing. Brackets are 95\% bootstrap confidence intervals. Reasoning-token savings are measured relative to always using $M_r$. Thresholded methods are selected on validation set; the table reports realized test-set review rates.}
\label{tab:main_results}
\end{table}

Table~\ref{tab:main_results} addresses the practical question behind the experiment: after allocating limited human review, how much reasoning compute is still needed.
The single-model and random rows show that invoking or randomly selecting the reasoning model changes accuracy only modestly.
The policies that can route difficult cases to humans reach a higher accuracy range.
The discussion below separates these effects into human review, cost-aware reasoning, and ambiguity supervision.

\paragraph{Human review must be a routing action.}
The methods that always choose between $M_d$ and $M_r$ top out at 79.1\% accuracy with Oracle, below the thresholded methods that can route examples to human review.
This pattern is expected when some samples are unsolvable by both candidate models: choosing the better automated output is still insufficient for those cases.
Human review therefore needs to be modeled as a routing action, rather than added only after automated model selection.

\paragraph{Cost-aware routing must be paired with abstention.}
MVOR-only saves reasoning tokens but remains close to the single-model baselines in accuracy.
This separates two effects: the MVOR threshold controls when to spend reasoning tokens, but it does not by itself identify cases that should leave the automated path.
After adding the ambiguity gate, DRR reaches the accuracy range of the methods that route examples to humans while achieving more than 60\% reasoning-token cost savings compared to MaxConf.
Thus cost-aware model selection is most useful when paired with abstention for likely double failures.

\paragraph{Ambiguity supervision adds signal beyond confidence.}
MaxConf performs well because model-reported confidence is a useful first filter: many difficult examples are also low-confidence examples.
However, for cold-start annotation, the review set is also training signal for the next iteration, so the useful cases are not necessarily the least confident ones.
The more valuable cases are those that reveal where automation fails, especially examples likely to fail under both $M_d$ and $M_r$ or to expose unstable rule boundaries.
DRR uses differential supervision to learn direct-model success, reasoning-model success, and joint automated failure, giving the human gate a task-specific signal beyond confidence alone.

\subsection{Routing Zone Analysis}
\label{subsec:zone_analysis}

\begin{table}[t]
\centering
\resizebox{\columnwidth}{!}{%
\begin{tabular}{lrrrrr}
\toprule
\textbf{Zone} & $N$ \textbf{Inst.} & \textbf{\% Inst.} & \textbf{$M_d$ Acc} & \textbf{$M_r$ Acc} & \textbf{\% Unsolv. Inst.} \\
\midrule
Economy & 570 & 40.6 & 76.7 & 78.6 & 12.8 \\
Reasoning & 530 & 37.7 & 75.1 & 79.6 & 12.5 \\
Ambiguity & 305 & 21.7 & 39.7 & 33.8 & \textbf{50.8} \\
\midrule
Population & 1405 & 100 & 68.0 & 69.3 & 20.9 \\
\bottomrule
\end{tabular}%
}
\caption{Per-zone statistics on the test set. Zones correspond to DRR's three inference actions: direct routing, reasoning routing, and human escalation. Unsolv. Inst. is the percentage of instances in each zone where both automated models fail the full rule set.}
\label{tab:zone_analysis}
\end{table}

Table~\ref{tab:zone_analysis} checks whether the inference policy creates meaningful regions.
The Ambiguity Zone has the highest unsolvable-instance rate, while the Economy and Reasoning Zones retain mostly automatable samples.
Intuitively, Economy examples tend to have sufficient direct evidence, Reasoning examples require additional cross-checking of the image and metadata, and Ambiguity examples resemble the borderline subjective cases in Appendix~\ref{subsec:subjective_rule_examples}.
This validates DRR's ability to preserve human capacity while choosing between cheap and expensive model calls.

\subsection{Per-Rule Diagnostics}
\label{subsec:rule_diagnostics}

\begin{table}[t]
\centering
\resizebox{\columnwidth}{!}{%
\begin{tabular}{lcccl}
\toprule
\textbf{Rule} & $M_d$ \textbf{Acc}  & $M_r$ \textbf{Acc}  & \textbf{\% Joint Fail} & \textbf{Type} \\
\midrule
D1: Primary Focus & 68.1 & 79.4 & 17.4 & Subj. \\
E1: Front Visible & 76.7 & 78.5 & 16.7 & Subj. \\
D5: Photorealistic & 80.6 & 81.9 & 14.5 & Subj. \\
D3: Standard View & 79.3 & 80.1 & 14.0 & Subj. \\
E3: Correct Set & 90.1 & 84.3 & 6.3 & Subj. \\
\midrule
D2: Functionality & 95.9 & 91.0 & 0.0 & Obj. \\
E2: Standard State & 95.2 & 94.2 & 0.0 & Obj. \\
E4: Props & 96.9 & 96.7 & 0.0 & Obj. \\
D4: Dimensions & 98.7 & 98.0 & 0.0 & Obj. \\
E5: No Text & 97.7 & 99.0 & 0.0 & Obj. \\
D6: No Person & 99.0 & 99.3 & 0.0 & Obj. \\
\bottomrule
\end{tabular}%
}
\caption{Per-rule diagnostics. Joint Fail is the percentage of samples for which both automated models fail the same rule. Subjective rules from Appendix~\ref{subsec:subjective_rule_examples} concentrate these rule-level joint failures, while objective checks are more stable.}
\label{tab:rule_analysis}
\end{table}

Table~\ref{tab:rule_analysis} explains why the ambiguity region exists: rule-level joint failures concentrate in subjective rules, matching the borderline cases in Appendix~\ref{subsec:subjective_rule_examples}.
Objective checks are more stable because their criteria are directly verifiable.
This validates the rule-level heads in Section~\ref{sec:methodology}: they turn system-level routing failures into diagnoses of which business rules need clarification or human oversight.
Human referrals therefore serve both as a quality safeguard and as targeted ground truth for future router training and rule refinement.

\subsection{Pareto Analysis}
\label{subsec:pareto}

\begin{figure}[t]
    \centering
    \includegraphics[width=\columnwidth]{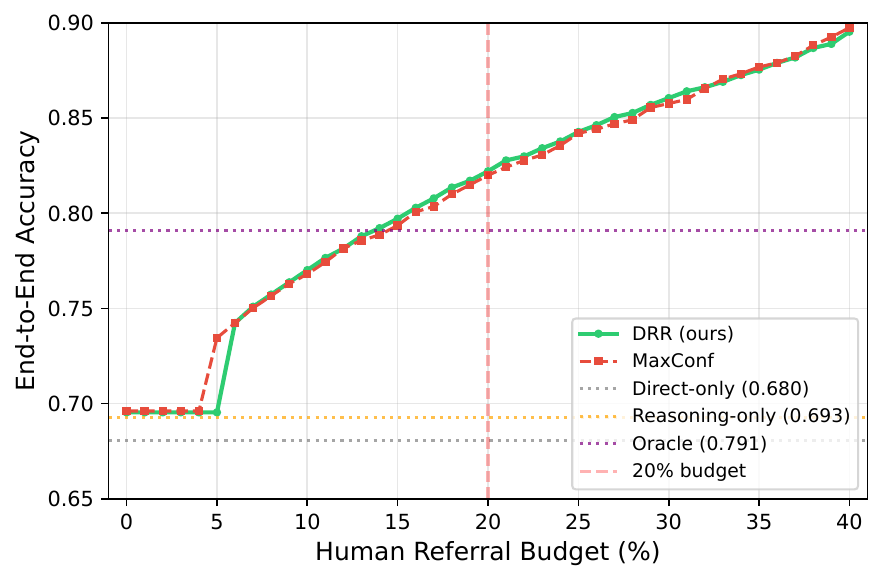}
    \caption{Accuracy-human budget trade-off. The x-axis varies the allowed human-referral budget, and the dashed line marks the operating point used in Table~\ref{tab:main_results}. Dotted horizontal lines show automated reference points. The curves show how different escalation policies convert limited human review into end-to-end accuracy during cold-start rollout.}
    \label{fig:pareto}
\end{figure}

Figure~\ref{fig:pareto} treats the human-referral threshold as a deployment control for cold-start annotation.
Moving right allocates more reviewer capacity while rules, prompts, and router calibration are still being refined; moving left preserves human capacity but requires more automated decisions.
The steep accuracy gains at low review budgets show why selective human escalation is valuable during rollout, and the operating region in Table~\ref{tab:main_results} shows that DRR remains competitive with simple confidence-based escalation while retaining explicit control over reasoning-token spend.
\section{Conclusions}
\label{sec:conclusions}

DRR shows that routing can do more than select the cheapest successful inference path.
By estimating where direct inference is sufficient, where reasoning is likely to help, and where both automated paths are unreliable, the framework turns cold-start LLM annotation into an explicit operating policy.
Its routing zones and rule-level diagnostics identify which traffic can be automated, which cases deserve reasoning tokens, and which examples should become human-labeled ground truth.
Rather than waiting for a large balanced evaluation set or committing to a single model choice, practitioners can launch with a controlled human-review budget, monitor the accuracy-review tradeoff, and improve prompts, calibration, training data, and rule definitions as the task warms up.

\section*{Limitations}
\label{sec:limitations}

This study evaluates DRR on a specific production e-commerce annotation workflow with rule based annotations.
Although the framework is intended for rule-based LLM annotations more broadly, its behavior may differ across domains, modalities, rule structures, and operational review processes.

Our experiments also reflect the model landscape at the time of deployment. Although DRR is model-agnostic by design, commercial LLMs evolve rapidly. For each new pair of models, we recommend regenerating candidate predictions and recalibrating or retraining the router to maximize performance gains.

We do not evaluate label efficiency across training-label budgets.
The reported DRR result uses the full pre-launch training split, while confidence thresholding requires only validation-set operating-point selection.
Thus, our comparison should be read as how to use a mandatory pre-production labeled corpus in deployment, not as evidence that DRR is preferable when all labels must be newly acquired.

Finally, we treat human review as ground truth, matching its role in the production decision process.
In practice, human labels can contain disagreement or inconsistency, especially for subjective rules.
Such noise may affect both reported accuracy and the supervision used to train ambiguity and rule-level failure signals.

\bibliography{main}

@misc{wang2025reason,
      title={When to Reason: Semantic Router for vLLM}, 
      author={Chen Wang and Xunzhuo Liu and Yuhan Liu and Yue Zhu and Xiangxi Mo and Junchen Jiang and Huamin Chen},
      year={2025},
      eprint={2510.08731},
      archivePrefix={arXiv},
      primaryClass={cs.ET},
      url={https://arxiv.org/abs/2510.08731}, 
}

@inproceedings{ding2025best,
author = {Ding, Dujian and Mallick, Ankur and Zhang, Shaokun and Wang, Chi and Madrigal, Daniel and Garcia, Mirian Del Carmen Hipolito and Xia, Menglin and Lakshmanan, Laks V. S. and Wu, Qingyun and R{\"u}hle, Victor},
title = {BEST-route: adaptive LLM routing with test-time optimal compute},
year = {2025},
publisher = {JMLR.org},
booktitle = {Proceedings of the 42nd International Conference on Machine Learning},
articleno = {530},
numpages = {15},
location = {Vancouver, Canada},
series = {ICML'25}
}

@inproceedings{damani2024learning,
 author = {Damani, Mehul and Shenfeld, Idan and Peng, Andi and Bobu, Andreea and Andreas, Jacob},
 booktitle = {International Conference on Learning Representations},
 editor = {Y. Yue and A. Garg and N. Peng and F. Sha and R. Yu},
 pages = {102783--102802},
 title = {Learning How Hard to Think: Input-Adaptive Allocation of LM Computation},
 url = {https://proceedings.iclr.cc/paper_files/paper/2025/file/ff414825df833edb8b1839e3d5d495e9-Paper-Conference.pdf},
 volume = {2025},
 year = {2025}
}

@misc{zhu2025can,
      title={When Can Large Reasoning Models Save Thinking? Mechanistic Analysis of Behavioral Divergence in Reasoning}, 
      author={Rongzhi Zhu and Yi Liu and Zequn Sun and Yiwei Wang and Wei Hu},
      year={2025},
      eprint={2505.15276},
      archivePrefix={arXiv},
      primaryClass={cs.AI},
      url={https://arxiv.org/abs/2505.15276}, 
}

@inproceedings{liang2025thinkswitcher,
    title = "{T}hink{S}witcher: When to Think Hard, When to Think Fast",
    author = "Liang, Guosheng  and
      Zhong, Longguang  and
      Yang, Ziyi  and
      Quan, Xiaojun",
    editor = "Christodoulopoulos, Christos  and
      Chakraborty, Tanmoy  and
      Rose, Carolyn  and
      Peng, Violet",
    booktitle = "Findings of the Association for Computational Linguistics: EMNLP 2025",
    month = nov,
    year = "2025",
    address = "Suzhou, China",
    publisher = "Association for Computational Linguistics",
    url = "https://aclanthology.org/2025.findings-emnlp.278/",
    doi = "10.18653/v1/2025.findings-emnlp.278",
    pages = "5185--5201",
    ISBN = "979-8-89176-335-7"
}

@inproceedings{shojaee2025illusion,
 author = {Shojaee, Parshin and Mirzadeh, Iman and Alizadeh vahid, Keivan and Horton, Maxwell and Bengio, Samy and Farajtabar, Mehrdad},
 booktitle = {Advances in Neural Information Processing Systems},
 doi = {10.52202/085713-3603},
 editor = {D. Belgrave and C. Zhang and H. Lin and R. Pascanu and P. Koniusz and M. Ghassemi and N. Chen},
 pages = {108018--108059},
 publisher = {Curran Associates, Inc.},
 title = {The Illusion of Thinking: Understanding the Strengths and Limitations of Reasoning Models via the Lens of Problem Complexity},
 url = {https://proceedings.neurips.cc/paper_files/paper/2025/file/9b26ad15462c81548c0689188d2e8018-Paper-Conference.pdf},
 volume = {38, Main Conference},
 year = {2025}
}

@InProceedings{mohta2023large,
  title = 	 {Are large language models good annotators?},
  author =       {Mohta, Jay and Ak, Kenan and Xu, Yan and Shen, Mingwei},
  booktitle = 	 {Proceedings on "I Can't Believe It's Not Better: Failure  Modes in the Age of Foundation Models" at NeurIPS 2023 Workshops},
  pages = 	 {38--48},
  year = 	 {2023},
  editor = 	 {Antorán, Javier and Blaas, Arno and Buchanan, Kelly and Feng, Fan and Fortuin, Vincent and Ghalebikesabi, Sahra and Kriegler, Andreas and Mason, Ian and Rohde, David and Ruiz, Francisco J. R. and Uelwer, Tobias and Xie, Yubin and Yang, Rui},
  volume = 	 {239},
  series = 	 {Proceedings of Machine Learning Research},
  month = 	 {16 Dec},
  publisher =    {PMLR},
  url = 	 {https://proceedings.mlr.press/v239/mohta23a.html},
}

@inproceedings{calderon2025alternative,
    title = "The Alternative Annotator Test for {LLM}-as-a-Judge: How to Statistically Justify Replacing Human Annotators with {LLM}s",
    author = "Calderon, Nitay  and
      Reichart, Roi  and
      Dror, Rotem",
    editor = "Che, Wanxiang  and
      Nabende, Joyce  and
      Shutova, Ekaterina  and
      Pilehvar, Mohammad Taher",
    booktitle = "Proceedings of the 63rd Annual Meeting of the Association for Computational Linguistics (Volume 1: Long Papers)",
    month = jul,
    year = "2025",
    address = "Vienna, Austria",
    publisher = "Association for Computational Linguistics",
    url = "https://aclanthology.org/2025.acl-long.782/",
    doi = "10.18653/v1/2025.acl-long.782",
    pages = "16051--16081",
    ISBN = "979-8-89176-251-0",
}

@misc{kadavath2022language,
      title={Language Models (Mostly) Know What They Know}, 
      author={Saurav Kadavath and Tom Conerly and Amanda Askell and Tom Henighan and Dawn Drain and Ethan Perez and Nicholas Schiefer and Zac Hatfield-Dodds and Nova DasSarma and Eli Tran-Johnson and Scott Johnston and Sheer El-Showk and Andy Jones and Nelson Elhage and Tristan Hume and Anna Chen and Yuntao Bai and Sam Bowman and Stanislav Fort and Deep Ganguli and Danny Hernandez and Josh Jacobson and Jackson Kernion and Shauna Kravec and Liane Lovitt and Kamal Ndousse and Catherine Olsson and Sam Ringer and Dario Amodei and Tom Brown and Jack Clark and Nicholas Joseph and Ben Mann and Sam McCandlish and Chris Olah and Jared Kaplan},
      year={2022},
      eprint={2207.05221},
      archivePrefix={arXiv},
      primaryClass={cs.CL},
      url={https://arxiv.org/abs/2207.05221}, 
}

@inproceedings{ulmer2024calibrating,
    title = "Calibrating Large Language Models Using Their Generations Only",
    author = "Ulmer, Dennis  and
      Gubri, Martin  and
      Lee, Hwaran  and
      Yun, Sangdoo  and
      Oh, Seong",
    editor = "Ku, Lun-Wei  and
      Martins, Andre  and
      Srikumar, Vivek",
    booktitle = "Proceedings of the 62nd Annual Meeting of the Association for Computational Linguistics (Volume 1: Long Papers)",
    month = aug,
    year = "2024",
    address = "Bangkok, Thailand",
    publisher = "Association for Computational Linguistics",
    url = "https://aclanthology.org/2024.acl-long.824/",
    doi = "10.18653/v1/2024.acl-long.824",
    pages = "15440--15459",
}

@inproceedings{khanmohammadi2025calibrating,
    title = "Calibrating {LLM} Confidence by Probing Perturbed Representation Stability",
    author = "Khanmohammadi, Reza  and
      Miahi, Erfan  and
      Mardikoraem, Mehrsa  and
      Kaur, Simerjot  and
      Brugere, Ivan  and
      Smiley, Charese  and
      Thind, Kundan S  and
      Ghassemi, Mohammad M.",
    editor = "Christodoulopoulos, Christos  and
      Chakraborty, Tanmoy  and
      Rose, Carolyn  and
      Peng, Violet",
    booktitle = "Proceedings of the 2025 Conference on Empirical Methods in Natural Language Processing",
    month = nov,
    year = "2025",
    address = "Suzhou, China",
    publisher = "Association for Computational Linguistics",
    url = "https://aclanthology.org/2025.emnlp-main.530/",
    doi = "10.18653/v1/2025.emnlp-main.530",
    pages = "10448--10514",
    ISBN = "979-8-89176-332-6",
}

@inproceedings{maurya2025selectllm,
    title = "{S}elect{LLM}: Query-Aware Efficient Selection Algorithm for Large Language Models",
    author = "Maurya, Kaushal Kumar  and
      Srivatsa, Kv Aditya  and
      Kochmar, Ekaterina",
    editor = "Che, Wanxiang  and
      Nabende, Joyce  and
      Shutova, Ekaterina  and
      Pilehvar, Mohammad Taher",
    booktitle = "Findings of the Association for Computational Linguistics: ACL 2025",
    month = jul,
    year = "2025",
    address = "Vienna, Austria",
    publisher = "Association for Computational Linguistics",
    url = "https://aclanthology.org/2025.findings-acl.1072/",
    doi = "10.18653/v1/2025.findings-acl.1072",
    pages = "20847--20863",
    ISBN = "979-8-89176-256-5",
}

@inproceedings{madras2018predict,
 author = {Madras, David and Pitassi, Toni and Zemel, Richard},
 booktitle = {Advances in Neural Information Processing Systems},
 editor = {S. Bengio and H. Wallach and H. Larochelle and K. Grauman and N. Cesa-Bianchi and R. Garnett},
 pages = {},
 publisher = {Curran Associates, Inc.},
 title = {Predict Responsibly: Improving Fairness and Accuracy by Learning to Defer},
 url = {https://proceedings.neurips.cc/paper_files/paper/2018/file/09d37c08f7b129e96277388757530c72-Paper.pdf},
 volume = {31},
 year = {2018}
}

@InProceedings{mozannar2020consistent,
  title = 	 {Consistent Estimators for Learning to Defer to an Expert},
  author =       {Mozannar, Hussein and Sontag, David},
  booktitle = 	 {Proceedings of the 37th International Conference on Machine Learning},
  pages = 	 {7076--7087},
  year = 	 {2020},
  editor = 	 {III, Hal Daumé and Singh, Aarti},
  volume = 	 {119},
  series = 	 {Proceedings of Machine Learning Research},
  month = 	 {13--18 Jul},
  publisher =    {PMLR},
  url = 	 {https://proceedings.mlr.press/v119/mozannar20b.html},
}

@inproceedings{aguda2024large,
    title = "Large Language Models as Financial Data Annotators: A Study on Effectiveness and Efficiency",
    author = "Aguda, Toyin D.  and
      Siddagangappa, Suchetha  and
      Kochkina, Elena  and
      Kaur, Simerjot  and
      Wang, Dongsheng  and
      Smiley, Charese",
    editor = "Calzolari, Nicoletta  and
      Kan, Min-Yen  and
      Hoste, Veronique  and
      Lenci, Alessandro  and
      Sakti, Sakriani  and
      Xue, Nianwen",
    booktitle = "Proceedings of the 2024 Joint International Conference on Computational Linguistics, Language Resources and Evaluation (LREC-COLING 2024)",
    month = may,
    year = "2024",
    address = "Torino, Italia",
    publisher = "ELRA and ICCL",
    url = "https://aclanthology.org/2024.lrec-main.885/",
    pages = "10124--10145",
}

\clearpage
\appendix
\section{Appendix}

\subsection{Lead Image Eligibility Rules}
\label{subsec:lead_image_rules}

The production task studied in this paper is lead-image eligibility: given a product image and listing metadata, decide whether the image is suitable to appear as the primary image shown to customers.
The decision is rule-based rather than a single visual-classification judgment.
A product image is valid only if it satisfies a conjunction of business rules covering product visibility, presentation state, image cleanliness, product-set consistency, and prohibited visual elements.
If any rule fails, the full image-level decision is invalid.

Table~\ref{tab:lead_image_rules_examples} lists the 11 rules used for annotation, along with representative examples.
The rules span relatively objective visual checks, such as detecting overlaid text, visible people, or dimension markings, and more contextual judgments, such as whether the target product is the primary focus, whether enough of the front is visible, or whether the displayed items match the listed set composition.
This mix creates cases where direct inference is sufficient, cases where additional reasoning may help, and cases where human review remains necessary.

\begin{table*}[t]
\centering
\scriptsize
\setlength{\tabcolsep}{4pt}
\renewcommand{\arraystretch}{1.05}

\begingroup
\renewcommand{\tabularxcolumn}[1]{m{#1}}

\begin{tabularx}{\textwidth}{
@{}
>{\raggedright\arraybackslash}m{0.17\textwidth}
>{\raggedright\arraybackslash}X
>{\raggedright\arraybackslash}m{0.10\textwidth}
>{\centering\arraybackslash}m{0.13\textwidth}
@{}
}
\toprule
\textbf{Rule} & \textbf{Criterion} & \textbf{Example Type} & \textbf{Example} \\
\midrule

E1: Front Visible &
At least 75\% of the product front is visible. &
Eligible &
\ruleimg{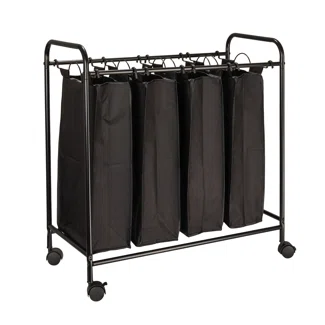} \\

E2: Standard State &
The product is shown in its normal, non-functional state. &
Eligible &
\ruleimg{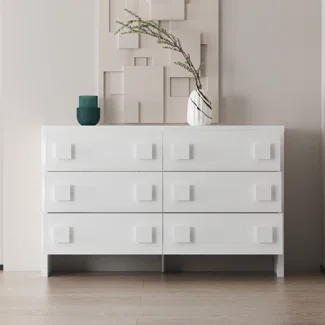} \\

E3: Correct Set &
The image shows the correct quantity or all pieces included in the set. &
Eligible &
\ruleimg{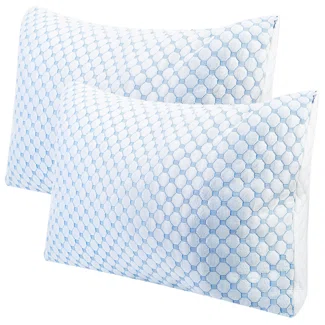} \\

E4: Props &
Props are acceptable only if the target product remains the main focus. &
Eligible &
\ruleimg{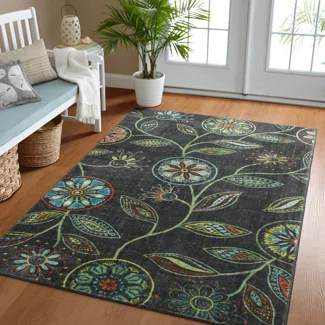} \\

E5: No Text &
The image contains no added text, labels, badges, or promotional callouts. &
Eligible &
\ruleimg{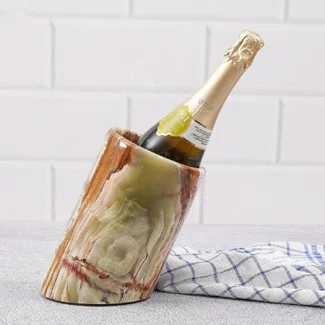} \\

D1: Primary Focus &
The target product must be the main visual focus of the image. &
Violation &
\ruleimg{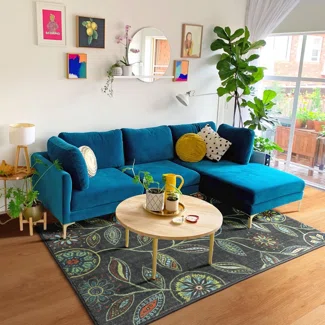} \\

D2: Functionality &
The product should not be shown in use or in an altered functional state. &
Violation &
\ruleimg{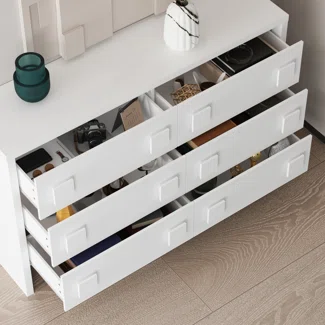} \\

D3: Standard View &
The image should show a standard product view, not mainly a side, top, close-up, or detail view. &
Violation &
\ruleimg{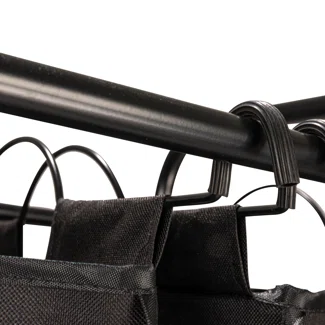} \\

D4: Dimensions &
The image should not contain dimensions, arrows, schematics, measurement labels, or similar diagram elements. &
Violation &
\ruleimg{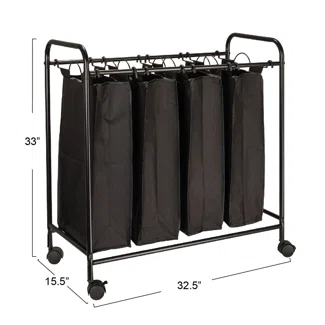} \\

D5: Photorealistic &
The image should be a clean photograph or photorealistic render, without obvious compositing or unrelated inserts. &
Violation &
\ruleimg{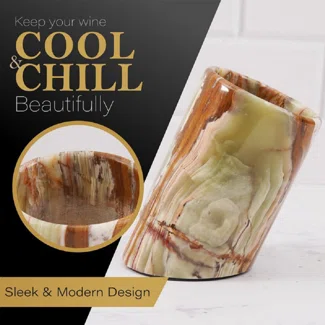} \\

D6: No Person &
No person or visible body part should appear in the image. &
Violation &
\ruleimg{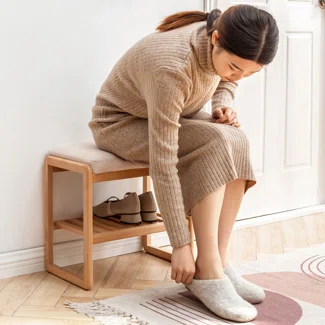} \\

\bottomrule
\end{tabularx}
\endgroup
\caption{Lead image eligibility rules with representative examples. A product image is valid only when all rules are satisfied.}
\label{tab:lead_image_rules_examples}
\end{table*}

\subsection{Qualitative Examples of Subjective Rules}
\label{subsec:subjective_rule_examples}

Table~\ref{tab:subjective_rule_examples} shows borderline cases for rules with higher subjectivity.
In these examples, the visual evidence is present, but the business-rule boundary is underspecified: a product may compete with surrounding room context, be shown at an angle that partially obscures the front, include small edited artifacts, or make set quantity difficult to infer from the image alone.
Such cases can produce disagreement between automated annotators and human reviewers, and sometimes between human reviewers themselves.

The examples also clarify why an ambiguity pathway and rule-level diagnostics are useful.
Additional reasoning does not always resolve underspecified visual judgments, while rule-level predictions identify which business rule is responsible for a system-level failure.
In deployment, reviewed ambiguity cases provide both training signal for future router updates and evidence for refining the underlying rule specification.

\begin{table*}[t]
\centering
\caption{Borderline examples for rules with higher subjectivity. These cases illustrate why human review and rule-level diagnostics remain useful even when reasoning models are available.}
\label{tab:subjective_rule_examples}
\scriptsize
\setlength{\tabcolsep}{4pt}
\renewcommand{\arraystretch}{1.05}

\begingroup
\renewcommand{\tabularxcolumn}[1]{m{#1}}

\begin{tabularx}{\textwidth}{
@{}
>{\raggedright\arraybackslash}m{0.15\textwidth}
>{\raggedright\arraybackslash}X
>{\centering\arraybackslash}m{0.15\textwidth}
>{\centering\arraybackslash}m{0.15\textwidth}
@{}
}
\toprule
\textbf{Rule} & \textbf{Source of Subjectivity} & \textbf{Example A} & \textbf{Example B} \\
\midrule

D1: Primary Focus &
Requires judging whether the target product dominates the scene. Here, the \textbf{\textit{fireplace mantel}} blends into the room and competes with surrounding objects. &
\ambigimg{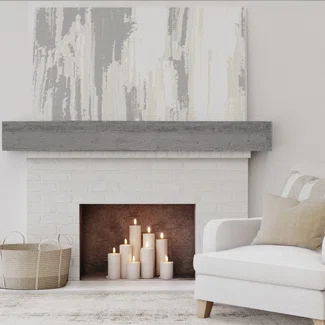} &
\ambigimg{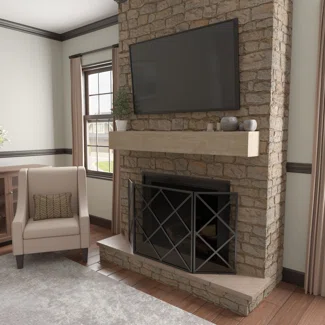} \\

E1: Front Visible &
Requires estimating whether enough of the product front is visible. Example A shows the \textbf{\textit{fireplace mantel}} at a strong angle, making the visible front area difficult to judge. Example B shows the \textbf{\textit{table}} in a staged scene, where surrounding objects make the front view less clear. &
\ambigimg{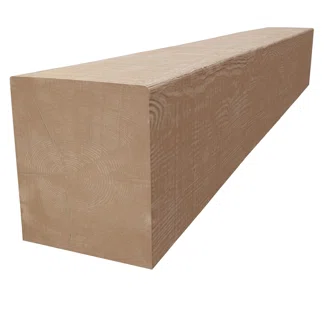} &
\ambigimg{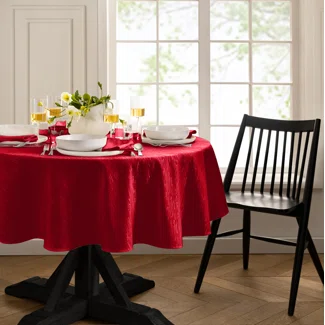} \\

D5: Photorealistic &
Requires judging whether small digitally added or composited elements are severe enough to affect image eligibility. Both examples contain minor edited elements, but the overall product presentation remains photorealistic, so these artifacts may be ignored or accepted by human annotators and the model. &
\ambigimg{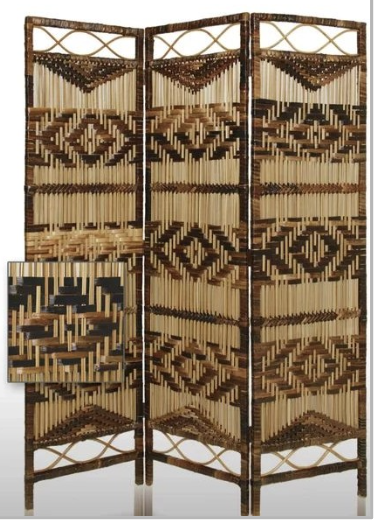} &
\ambigimg{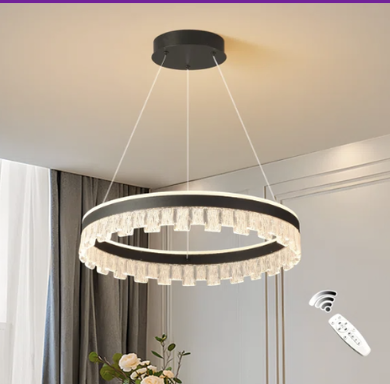} \\

D3: Standard View &
Requires judging whether an angled or elevated view still counts as a standard product view. Example A shows a \textbf{\textit{three-quarter angled view}} that still represents the product, while Example B uses an elevated room-view angle that may be interpreted differently by annotators. &
\ambigimg{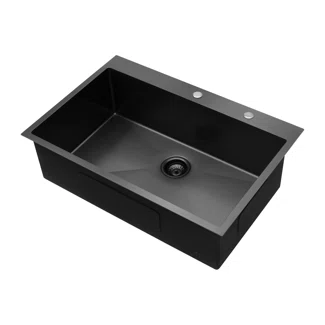} &
\ambigimg{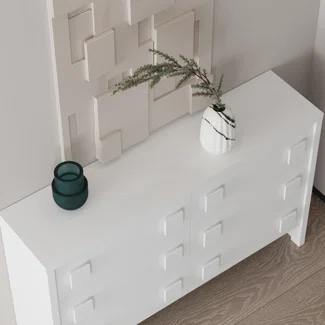} \\

E3: Correct Set &
Requires judging whether the visible items match the listed quantity or set composition. Example A lists a \textbf{\textit{2-piece bedding set}}, but appears to show three items. Example B lists \textbf{\textit{4 privacy screens/rolls}}, but the continuous installation makes the quantity hard to verify. &
\ambigimg{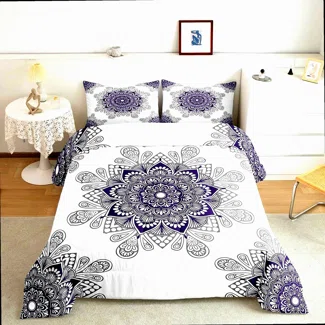} &
\ambigimg{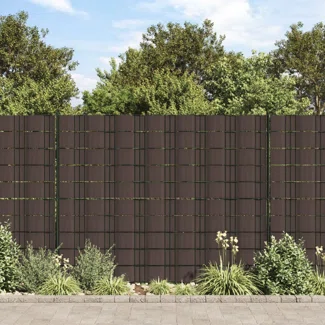} \\

\bottomrule
\end{tabularx}
\endgroup
\end{table*}

\subsection{Implementation Details}
\label{subsec:implementation_details}

DRR is implemented as a lightweight supervised routing layer on top of frozen multimodal features.
Image and text embeddings are generated offline and cached, so online inference only requires a compact neural router rather than repeated feature extraction.
The router shares a small trunk across system-level, rule-level, and ambiguity heads, which keeps the decision layer inexpensive while preserving rule-level diagnostics.

Training uses standard regularized optimization with early stopping on validation performance.
The Lagrangian multiplier is updated during training to enforce the reasoning-budget constraint, while human-escalation thresholds are selected on validation data under the target review budget.
No additional fine-tuning of the underlying direct or reasoning LLMs is required; all adaptation is concentrated in the routing layer.

\subsection{Optimality of Threshold Policy}
\label{subsec:threshold_optimality}

This section derives the automated model-selection rule used after the human-escalation gate has been applied.
It does not claim optimality for the full three-way policy; the human gate is tuned separately under the review budget.
Let $\mathcal{A}$ denote the set of queries that pass this gate:
\begin{equation*}
\begin{aligned}
    \mathcal{A} = \{&q: \hat{p}_{\text{amb}}(q) \le \tau_{\text{amb}} \;\text{and}\; \\
    &\max(\hat{p}_d(q),\hat{p}_r(q)) \ge \tau_{\text{conf}} \}.
\end{aligned}
\end{equation*}
Conditional on $q \in \mathcal{A}$, the remaining action set is binary, $\{M_d,M_r\}$.

For a query $q$, define the estimated risks of the direct and reasoning models as
\begin{equation*}
    R_d(q) = 1 - \hat{p}_d(q), \qquad R_r(q) = 1 - \hat{p}_r(q),
\end{equation*}
and let $\Delta C_q = C_r(q)-C_d(q) \ge 0$ be the incremental cost of invoking reasoning.
The Marginal Value of Reasoning is the estimated risk reduction from routing to $M_r$:
\begin{equation*}
    \text{MVOR}(q) = R_d(q)-R_r(q) = \hat{p}_r(q)-\hat{p}_d(q).
\end{equation*}

Consider the binary policy $\pi: \mathcal{A} \to \{M_d,M_r\}$ with the same incremental reasoning-budget constraint used in Section~\ref{sec:methodology}, restricted to queries that are not sent to human review:
\begin{equation}
\label{eq:policy_opt}
\begin{aligned}
    \min_{\pi} \quad & \E[R_{\pi(q)}(q) \mid q \in \mathcal{A}] \\
    \text{s.t.} \quad & \E[\Delta C_q \cdot \1\{\pi(q)=M_r\} \mid q \in \mathcal{A}] \le B_{\text{target}}.
\end{aligned}
\end{equation}

\begin{proposition}[Threshold Form of Model Selection]
\label{prop:threshold_policy}
For any fixed multiplier $\lambda \ge 0$, minimizing the Lagrangian relaxation of Equation~\ref{eq:policy_opt} over $q \in \mathcal{A}$ decomposes into independent per-query decisions.
For each query, the cost-adjusted decision routes to $M_r$ exactly when
\begin{equation}
\label{eq:threshold_policy}
    \text{MVOR}(q) > \lambda \Delta C_q,
\end{equation}
and otherwise routes to $M_d$.
\end{proposition}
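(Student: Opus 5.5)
The approach is to write down the Lagrangian of Equation~\ref{eq:policy_opt} explicitly, use linearity of expectation to turn it into a single expectation of a per-query quantity, and then minimize that quantity pointwise.

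For fixed $\lambda \ge 0$, write the policy through its indicator $a(q) = \1\{\pi(q)=M_r\} \in \{0,1\}$. The risk of the chosen action is then
\begin{equation*}
R_{\pi(q)}(q) = R_d(q) - a(q)\,\text{MVOR}(q).
\end{equation*}
Substituting this into the Lagrangian gives
\begin{equation*}
\mathcal{L}(\pi,\lambda) = \E\big[R_d(q) - a(q)\big(\text{MVOR}(q) - \lambda \Delta C_q\big) \mid q\in\mathcal{A}\big] - \lambda B_{\text{target}}.
\end{equation*}
The terms $\E[R_d(q)\mid q\in\mathcal{A}]$ and $\lambda B_{\text{target}}$ do not depend on $\pi$. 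Minimizing $\mathcal{L}$ over $\pi$ is therefore the same as maximizing $\E[a(q)\,g(q)\mid q\in\mathcal{A}]$, where $g(q) = \text{MVOR}(q) - \lambda\Delta C_q$.

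Next comes the decomposition. Policies are unrestricted maps $\mathcal{A}\to\{M_d,M_r\}$, so $a(q)$ can be chosen separately for each $q$. Pointwise, $a(q)g(q) \le \max(g(q),0)$ for every $q$. Taking expectations, $\E[a\,g] \le \E[\max(g,0)]$ for every policy. Equality holds if and only if $a(q)=1$ whenever $g(q)>0$ and $a(q)=0$ whenever $g(q)<0$, almost surely. So the Lagrangian minimizer decomposes into independent per-query decisions, and each decision routes to $M_r$ exactly when $\text{MVOR}(q) > \lambda\Delta C_q$. This is Equation~\ref{eq:threshold_policy}.

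The main obstacle is the tie set $\{g(q)=0\}$, where both actions give the same Lagrangian value. There I would state that the tie-breaking choice is immaterial for the relaxed objective and that the proposition's convention ($M_d$ on ties, i.e., strict inequality) is one valid minimizer. I would also note measurability, which is needed for $a$ to be a legitimate policy. It holds because $a$ is an indicator of a superlevel set of a measurable function of the head outputs. I would not claim that the threshold policy solves the constrained problem itself; the proposition makes no such claim. I would only remark that when $\lambda$ is chosen so the budget constraint is tight, standard weak duality gives the usual Lagrangian sufficiency.
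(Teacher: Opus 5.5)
Your proposal is correct and follows the same route as the paper: write the Lagrangian, use linearity of expectation to decouple it into per-query decisions, and route to $M_r$ exactly when $R_r(q)+\lambda\Delta C_q < R_d(q)$, i.e.\ $\text{MVOR}(q) > \lambda\Delta C_q$, with ties sent to $M_d$. Your pointwise bound $a(q)g(q)\le\max(g(q),0)$, together with the almost-sure and measurability remarks, makes rigorous the decoupling step that the paper simply asserts from linearity.
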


\begin{proof}
For a fixed $\lambda$, the Lagrangian relaxation differs from the constrained objective only by the per-query penalty for using $M_r$:
\begin{equation*}
    R_{\pi(q)}(q) + \lambda \Delta C_q \cdot \1\{\pi(q)=M_r\}.
\end{equation*}
Because the expectation is linear, the optimal policy can be chosen independently for each query.
Routing to $M_r$ is preferred when
\begin{align*}
    R_r(q) + \lambda \Delta C_q &< R_d(q) \\
    R_d(q)-R_r(q) &> \lambda \Delta C_q \\
    \text{MVOR}(q) &> \lambda \Delta C_q.
\end{align*}
Ties are broken conservatively in favor of $M_d$, matching the inference rule in Section~\ref{sec:methodology}.
\end{proof}

The multiplier $\lambda$ can be interpreted as a learned shadow price for reasoning budget.
Here, ``shadow price'' simply means the penalty assigned to spending one additional unit of reasoning budget in the routing rule: increasing $\lambda$ makes reasoning more expensive in the router's decision rule, so fewer queries are sent to $M_r$.
If $\hat{p}_d$ and $\hat{p}_r$ are calibrated, this threshold is optimal for true expected risk under the binary model-selection problem; otherwise it is optimal for the estimated risks used by the router.
This is why DRR uses supervised probability estimation and validation-based operating-point selection rather than relying on raw model confidence alone.

The same monotonicity appears in the relaxed training policy,
\begin{equation*}
    \pi_r(q) = \sigma\left(\frac{\text{MVOR}(q)-\lambda\Delta C_q}{T}\right).
\end{equation*}
For fixed temperature $T>0$, the probability of routing to $M_r$ increases with MVOR and decreases with both incremental cost and the learned budget price $\lambda$.

\subsection{Dual Update and Constraint Stabilization}
\label{subsec:convergence_proof}

DRR uses the Lagrangian term in Equation~\ref{eq:training_objective} to discourage routing policies that exceed the reasoning-token budget.
Let
\begin{equation*}
    g(\theta) = \bar{C}_{\text{route}}(\theta) - B_{\text{target}}
\end{equation*}
denote the budget violation of the relaxed routing policy.
The primal update treats the current multiplier as fixed and updates the router parameters using
\begin{equation*}
    \mathcal{L}_{\text{sup}}(\theta) + \mathcal{L}_{\text{route}}(\theta) + \lambda_t g(\theta).
\end{equation*}
The dual variable is then updated in log space:
\begin{equation}
\label{eq:log_dual_update}
    \nu_{t+1} = \nu_t + \eta_\lambda g(\theta_{t+1}), \qquad \lambda_t = e^{\nu_t}.
\end{equation}
Because the router is a neural model, this section does not claim global convergence to a constrained optimum.
Instead, it establishes the exact stabilization properties of the implemented dual update and explains why it is appropriate for enforcing the budget during training.

\begin{proposition}[Budget Response of the Log-Space Update]
\label{prop:dual_response}
For any step size $\eta_\lambda>0$, the update in Equation~\ref{eq:log_dual_update} satisfies
\begin{equation*}
    \lambda_{t+1} = \lambda_t \exp\!\left(\eta_\lambda g(\theta_{t+1})\right).
\end{equation*}
Therefore $\lambda_t$ remains positive for all $t$; if the expected routing cost exceeds the budget, $\lambda$ increases; if the router is under budget, $\lambda$ decreases; and if the budget is exactly met, the dual variable is unchanged.
\end{proposition}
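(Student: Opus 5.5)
The argument is a direct computation from the definition of the log-space update in Equation~\ref{eq:log_dual_update}, followed by sign properties of the exponential map. First, I will exponentiate both sides of $\nu_{t+1} = \nu_t + \eta_\lambda g(\theta_{t+1})$. Then $e^{a+b}=e^a e^b$ together with $\lambda_t = e^{\nu_t}$ gives the multiplicative form
\begin{equation*}
    \lambda_{t+1} = e^{\nu_{t+1}} = e^{\nu_t}\, e^{\eta_\lambda g(\theta_{t+1})} = \lambda_t \exp\!\left(\eta_\lambda g(\theta_{t+1})\right).
\end{equation*}
This establishes the displayed identity with no assumption on $g$ beyond its being finite. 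Finiteness holds because $\bar{C}_{\text{route}}$ is an average of bounded incremental costs weighted by sigmoid probabilities in $(0,1)$.

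Second, for positivity I will argue by induction on $t$. Alternatively, and more simply, I will note that $\lambda_t = e^{\nu_t}$ for a real $\nu_t$, so $\lambda_t>0$ for every $t$, including the initialization $\lambda_0 = e^{\nu_0}$. In the multiplicative form, positivity is preserved because the factor $\exp(\cdot)$ is strictly positive. This also shows $\lambda_t \ge 0$ as required by Proposition~\ref{prop:threshold_policy}, so the threshold interpretation stays valid along the whole trajectory.

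Third, for the budget response I will split into three cases on the sign of $g(\theta_{t+1}) = \bar{C}_{\text{route}}(\theta_{t+1}) - B_{\text{target}}$, using $\eta_\lambda>0$ and the strict monotonicity of $\exp$ with $\exp(0)=1$:
\begin{itemize}
\item if $g>0$ (over budget), then $\eta_\lambda g>0$, the factor exceeds $1$, and $\lambda_{t+1}>\lambda_t$;
\item if $g<0$ (under budget), the factor lies in $(0,1)$, and $\lambda_{t+1}<\lambda_t$;
\item if $g=0$, then $\nu_{t+1}=\nu_t$, and $\lambda$ is unchanged.
\end{itemize}
Multiplying the factor by $\lambda_t>0$ preserves these strict inequalities, which is why positivity is established before this step.

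There is no real obstacle; the only point requiring care is the order of the steps. Positivity of $\lambda_t$ must be settled before the case analysis, because comparing $\lambda_{t+1}$ with $\lambda_t$ relies on multiplying by a positive quantity. I will also remark that the statement concerns the dual step alone: it claims nothing about convergence of $\theta_t$, consistent with the caveat preceding the proposition.
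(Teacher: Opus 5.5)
Your proposal is correct and follows the paper's proof essentially step for step. You exponentiate the log-space update to get $\lambda_{t+1}=\lambda_t\exp(\eta_\lambda g(\theta_{t+1}))$, read off positivity from the strictly positive exponential factor, and split into cases on the sign of $g(\theta_{t+1})$. Your extra remarks, on the finiteness of $g$ and on needing $\lambda_t>0$ so the factor comparison carries over to $\lambda_{t+1}$ versus $\lambda_t$, are harmless refinements that the paper leaves implicit.
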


\begin{proof}
Since $\lambda_t=e^{\nu_t}$, exponentiating Equation~\ref{eq:log_dual_update} gives
\begin{equation*}
\begin{aligned}
    \lambda_{t+1}&=e^{\nu_{t+1}}\\
    &=e^{\nu_t}\exp\!\left(\eta_\lambda g(\theta_{t+1})\right)\\
    &=\lambda_t\exp\!\left(\eta_\lambda g(\theta_{t+1})\right).
\end{aligned}
\end{equation*}
The exponential term is always positive, so positivity is preserved.
If $g(\theta_{t+1})>0$, the exponential factor is greater than one; if $g(\theta_{t+1})<0$, it is less than one; and if $g(\theta_{t+1})=0$, it equals one.
\end{proof}

\begin{proposition}[Mirror-Ascent Interpretation]
\label{prop:mirror_ascent}
For fixed router parameters $\theta_{t+1}$, the multiplicative update in Proposition~\ref{prop:dual_response} is the solution to the one-step mirror-ascent problem
\begin{equation*}
    \lambda_{t+1}
    = \argmax_{u>0}
    \left\{
    \eta_\lambda g(\theta_{t+1})u
    - D(u\|\lambda_t)
    \right\},
\end{equation*}
where $D(u\|\lambda_t)=u\log(u/\lambda_t)-u+\lambda_t$ is the scalar relative-entropy divergence.
\end{proposition}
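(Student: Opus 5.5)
The argument is a direct first-order computation on a strictly concave scalar objective. Fix $a := \eta_\lambda g(\theta_{t+1})$, a real constant for fixed router parameters, and define
\begin{equation*}
    \Phi(u) = a u - u\log(u/\lambda_t) + u - \lambda_t, \qquad u>0.
\end{equation*}
The plan has three steps. First, show that $\Phi$ is strictly concave on $(0,\infty)$. Second, solve the stationarity condition. Third, check that the stationary point lies in the open domain and is the unique global maximizer. Then compare it with the closed form from Proposition~\ref{prop:dual_response}.

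\textbf{Concavity.} Differentiating gives
\begin{equation*}
    \Phi'(u) = a - \log(u/\lambda_t) - 1 + 1 = a - \log(u/\lambda_t),
\end{equation*}
and
\begin{equation*}
    \Phi''(u) = -1/u < 0 \quad \text{for all } u>0.
\end{equation*}
So $\Phi$ is strictly concave on $(0,\infty)$, and any stationary point is the unique global maximizer.

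\textbf{Stationary point.} Setting $\Phi'(u)=0$ gives $\log(u/\lambda_t)=a$, that is, $u^\star = \lambda_t e^{a}$. This is strictly positive because $\lambda_t>0$, which holds by Proposition~\ref{prop:dual_response} inductively, or simply because $\lambda_t=e^{\nu_t}$. Therefore $u^\star$ lies in the open domain and the constraint $u>0$ is never active.

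\textbf{Existence of the maximum.} For completeness, I would also confirm that the supremum is attained, so the argmax is well defined. As $u\to 0^+$, $\Phi'(u)\to+\infty$. As $u\to\infty$, $\Phi(u)\to-\infty$, because the $-u\log u$ term dominates. Both facts are consistent with $u^\star$ being the interior maximum.

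\textbf{Conclusion.} Comparing $u^\star = \lambda_t\exp(\eta_\lambda g(\theta_{t+1}))$ with Proposition~\ref{prop:dual_response} shows it equals $\lambda_{t+1}$, which proves the claim.

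\textbf{Remarks.} A brief note can add that $D(\cdot\|\lambda_t)$ is the Bregman divergence of the negative-entropy mirror map $\psi(u)=u\log u - u$. Its optimality condition $\nabla\psi(u)=\nabla\psi(\lambda_t)+a$ is exactly the log-space step $\nu_{t+1}=\nu_t+a$, which justifies the name ``mirror ascent.''

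There is no real obstacle here; the argument is elementary calculus. The only point needing care is checking that the maximizer lies in the open domain $u>0$, so that the first-order condition suffices. Strict concavity together with $\lambda_t>0$ settles this.
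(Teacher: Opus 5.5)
Your proposal is correct and follows the same route as the paper: differentiate the objective, solve $\log(u/\lambda_t)=\eta_\lambda g(\theta_{t+1})$, and use strict concavity to identify the unique maximizer with $\lambda_{t+1}$. Your explicit check $\Phi''(u)=-1/u<0$ and your confirmation that $u^\star$ lies in the open domain $u>0$ spell out what the paper states in one line.
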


\begin{proof}
Differentiate the objective with respect to $u$:
\begin{equation*}
    \eta_\lambda g(\theta_{t+1}) - \log(u/\lambda_t).
\end{equation*}
Setting this derivative to zero yields $\log(u/\lambda_t)=\eta_\lambda g(\theta_{t+1})$, or
\begin{equation*}
    u=\lambda_t\exp\!\left(\eta_\lambda g(\theta_{t+1})\right).
\end{equation*}
The objective is strictly concave in $u$, so the maximizer is unique.
\end{proof}

These propositions give the property needed by DRR: the multiplier is a stable, positive budget price that moves in the correct direction after every batch or epoch.
When routing exceeds the target budget, the next primal update penalizes reasoning more strongly; when routing is below budget, the penalty relaxes.
This feedback mechanism is sufficient for the role $\lambda$ plays in the router, namely controlling the threshold $\text{MVOR}(q)>\lambda\Delta C_q$.

Classical primal-dual convergence results require convexity, compactness, and exact or appropriately controlled stochastic updates.
Those assumptions do not hold for the non-convex neural router used here, so we do not claim global convergence or optimal constraint satisfaction from the theory alone.
In practice, the final operating point is selected on validation data, and the learned multiplier is used only together with validation-tuned human-escalation thresholds.

\subsection{Lead Image Evaluation Prompt}
\label{app:lead-image-prompt}

Each evaluation uses two prompt components. The system instruction defines the eligibility rules, confidence scoring, and required JSON output. The user message provides the product-specific inputs for each evaluation: the candidate image, the target product, and the catalog text needed to verify quantity, set composition, and visual eligibility. Placeholders such as \texttt{[product name]} and \texttt{[image]} are replaced with product-specific values before inference.

\clearpage
\onecolumn

\subsubsection*{System Instruction}

\begin{lstlisting}[
  style=leadimageprompt
]
You are an expert AI system performing quality assurance on e-commerce product images.

For each item, you will receive the evaluation rules below, product information, and a candidate lead image.

Instructions:
- Apply all rules exactly as written.
- Do not question or reinterpret the rules or product class.
- Evaluate every rule and provide concise rule-level reasoning.
- If a rule is ambiguous or uncertain, state why and reflect that uncertainty in the confidence score.
- Return only the required JSON structure.

Rule key:
- E = Eligibility requirement. The image must satisfy the rule.
- D = Disqualification condition. The image must not violate the rule.

Rules:

E1_Front_Visible_Enough: The image must show at least 75% of the target product's front, with no major truncation or cropping. The target product must be the primary focus. Forward-facing and angled front-facing views are acceptable when the product is visible and unobstructed.
Examples: Pass - a coffee table or sofa shown from the front or slight angle, fully in frame. Fail - a product cropped at major edges, less than 75% front visible, or visually dominated by foreground objects.

E2_Standard_State: The product must be shown in a standard, non-functional state. In-use styling and minor adjustments are acceptable, such as books in a bookcase, decor on a coffee table, an extended luggage handle, or swiveled wheels. Major functional changes that alter product form are not acceptable.
Examples: Pass - upright suitcase with handle extended; styled bookcase; coffee table with decor; sofa with pillows. Fail - sofa bed unfolded; dresser drawers open; dining table extension pulled apart; storage ottoman with lid removed.

E3_Correct_DSQ_Set: The image must show the correct quantity or all pieces in the set. Use quantity/set information from the product name, description, and feature bullets as ground truth, even if it conflicts with the DSQ field. If product text is silent, use DSQ. If neither product text nor DSQ specifies quantity, assume one item. Show multiple products only when explicitly specified. Do not treat separate items as one unit unless inseparable by design.
Examples: Pass - "Set of 4 Dining Chairs" shows 4 chairs; "10-piece cookware set" shows all 10 pieces; DSQ=4 and product text is silent, image shows 4 items. Fail - product text says 3 but image shows 1; set of 2 nightstands shows 1; DSQ=1 but image shows 5 separate sofas.

E4_Props_Appropriate: Props are acceptable only if the target product remains the primary focus.
Examples: Pass - coffee table with mug/book; sofa with pillows that do not obscure it. Fail - chair mostly covered by a prop; dresser mostly hidden behind other furniture.

E5_No_Text_In_Image: The image must not contain overlaid or digitally added text, even if it does not obscure the product. Text physically printed on the product or visible in decorative background items is allowed when it is not the main focus.
Examples: Pass - wall art text in the background; book titles; branding or printed text that is part of the product design. Fail - promotional banner, watermark, URL, measurement callouts, or digitally added "SALE" text.

D1_Not_Primary_Focus: The target product must be the main visual focus. It should occupy a substantial portion of the image, be centrally or prominently positioned, and stand out from the surroundings. Visibility alone is not sufficient. If the image is best described as a room, setting, or scene with the product present, the product is not the main focus.
Examples: Pass - product centered, large, and prominent, such as a sofa occupying most of the image. Fail - product is small, peripheral, or part of a wide scene, such as a light fixture in a porch photo or a chair at the edge of a room scene.

D2_Shows_Functionality: The image must not show the product primarily in a functional or altered-use state.
Examples: Pass - folded towels/sheets; curtains fully closed; styled bookcase. Fail - wardrobe doors/drawers open; refrigerator doors open; folding chair folded; shed doors open; chandelier lights on.

D3_Non_Standard_View: The image must not primarily show the side, top, back, close-up, or detail view. Multiple views of one product in a single image are not acceptable. Front-facing or slightly angled front views are acceptable when they represent the overall product.
Examples: Pass - sofa shown from front or slight angle, fully visible. Fail - sofa shown from the back only; table shown from top only; close-up of armrest/leg; multiple front/back/side views in one image.

D4_Dimensional_Diagram: The image must not contain dimensions, schematics, measurement labels, arrows, or similar diagrams.
Examples: Pass - clean product image with no added dimensions. Fail - product photo with measurement or diagram overlay.

D5_Non_Photorealistic: The image must be a clean, high-quality photograph or photorealistic 3D render. Reject images with obvious pasted props, floating accessories, insets, collages, montages, overlays, composited objects, or unrelated floating items. High-quality 3D renders are acceptable when indistinguishable from photos.
Examples: Pass - well-lit product photograph; high-quality photorealistic render. Fail - floating product covers/accessories; multiple cropped product views in one frame; computer-generated object floating above the product.

D6_Features_Person: The image must not contain a real person.
Examples: Pass - no people present; printed portrait or illustration on wall art or pillow as part of the product design. Fail - person sitting on sofa/chair; child sitting on bed.

Final eligibility checklist:
- List all failed rules.
- If any rule fails, the image is not lead eligible.
- If information is contradictory or insufficient, mark lead_image_eligible as "Unsure".

Confidence scoring:
- eligibility_confidence_score must be an integer from 1 to 5.
- 5 = high confidence; all rules and product information are complete, clear, and unambiguous.
- 4 = high confidence; mostly clear with minor ambiguity that does not affect the decision.
- 3 = low confidence; some ambiguity or incomplete information, but a cautious decision is possible.
- 2 = low confidence; significant ambiguity or gaps; decision is a limited-evidence guess.
- 1 = low confidence; information is highly contradictory, mostly missing, or nearly impossible to decide.

eligibility_confidence_reason must be one of:
- High Confidence: reasoning is fully supported and information is present.
- Low Confidence: a major part of the reasoning is unclear or ambiguous.
- Insufficient Information: not enough data to make a strong decision.
- Contradictory Information: image and product text conflict.
- Rule Ambiguous: a rule is open to multiple interpretations.

Output format: Return valid JSON only.

{
  "structured_reasoning": {
    "rules": {
      "E1_Front_Visible_Enough": { "status": "Pass" | "Fail", "reasoning": "..." },
      "E2_Standard_State": { "status": "Pass" | "Fail", "reasoning": "..." },
      "E3_Correct_DSQ_Set": { "status": "Pass" | "Fail" | "Cannot Verify Exception", "reasoning": "..." },
      "E4_Props_Appropriate": { "status": "Pass" | "Fail", "reasoning": "..." },
      "E5_No_Text_In_Image": { "status": "Pass" | "Fail", "reasoning": "..." },
      "D1_Not_Primary_Focus": { "status": "Pass" | "Fail", "reasoning": "..." },
      "D2_Shows_Functionality": { "status": "Pass" | "Fail" | "Cannot Verify Exception", "reasoning": "..." },
      "D3_Non_Standard_View": { "status": "Pass" | "Fail", "reasoning": "..." },
      "D4_Dimensional_Diagram": { "status": "Pass" | "Fail", "reasoning": "..." },
      "D5_Non_Photorealistic": { "status": "Pass" | "Fail", "reasoning": "..." },
      "D6_Features_Person": { "status": "Pass" | "Fail" | "Cannot Verify Exception", "reasoning": "..." }
    }
  },
  "lead_image_eligible": "True" | "False" | "Unsure",
  "eligibility_confidence_score": int,
  "eligibility_confidence_reason": "High Confidence" | "Low Confidence" | "Insufficient Information" | "Contradictory Information" | "Rule Ambiguous",
  "eligibility_correction_suggestion": "..." | null,
  "notes": []
}
\end{lstlisting}

\subsubsection*{Product Metadata Template}

\begin{lstlisting}[
  style=leadimageprompt
]
You are evaluating a lead image for the following product:
- **Product Name:** [product name]
## **Product Details**
- **Feature Bullets:** [feature bullets]
- **Product Description:** [description]
- **DSQ (Display Selling Quantity):** [dsq]
- **Set Information:** [set information]

[image]
\end{lstlisting}

\end{document}